\def\thisismainpaper{0}   %Uncomment this if this is the technical report
\theoremstyle{definition}
\newtheorem{theorem}{Theorem}[section]
\newtheorem{lemma}[theorem]{Lemma}
\def\scale#1{{\small #1}}
\newcommand{\E}{{\rm I\kern-.3em E}}
\def\diag{\operatorname{diag}}
\title{Energy-efficient Decentralized Learning via Graph Sparsification}
\name{Xusheng Zhang, Cho-Chun Chiu, and Ting He
%\thanks{}
}
\address{Pennsylvania State University, University Park, PA 16802, USA}
\begin{document}
%\ninept
%
\maketitle
\begin{abstract}

This work aims at improving the energy efficiency of decentralized learning by optimizing the mixing matrix, which controls the communication demands during the learning process. Through rigorous analysis based on a state-of-the-art decentralized learning algorithm, the problem is formulated as a bi-level optimization, with the lower level solved by graph sparsification. A solution with guaranteed performance is proposed for the special case of fully-connected base topology and a greedy heuristic is proposed for the general case. Simulations based on real topology and dataset show that the proposed solution can lower the energy consumption at the busiest node by $54\%$--$76\%$ while maintaining the quality of the trained model. 
%The abstract should appear at the top of the left-hand column of text, about 0.5 inch (12 mm) below the title area and no more than 3.125 inches (80 mm) in length.  Leave a 0.5 inch (12 mm) space between the end of the abstract and the beginning of the main text.  The abstract should contain about 100 to 150 words, and should be identical to the abstract text submitted electronically along with the paper cover sheet.  All manuscripts must be in English, printed in black ink.
\end{abstract}
%
%\begin{keywords}
%decentralized learning, mixing matrix. %, graph sparsification. %One, two, three, four, five
%\end{keywords}
%
\section{Introduction}
\label{sec:intro}

Learning from decentralized data \cite{McMahan17AISTATS} is an emerging machine learning paradigm that has found many applications~\cite{Kairouz21book}.

Communication efficiency has been a major consideration in designing learning algorithms, as the cost in communicating model updates, e.g., communication time, bandwidth consumption, and energy consumption, dominates the total operation cost in many application scenarios~\cite{McMahan17AISTATS}. Existing works on reducing this cost can be broadly classified into (i) model compression for reducing the cost per communication~\cite{Compression1, Lan23} %,Compression2,Compression3} 
and (ii) hyperparameter optimization for reducing the number of communications until convergence~\cite{%McMahan17AISTATS,%sysml19,Ngu19INFOCOM,
Wang19JSAC}. The two approaches are orthogonal and can be applied jointly. 

In this work, we focus on hyperparameter optimization in the decentralized learning setting, where nodes communicate with neighbors according to a given base topology~\cite{Lian17NIPS}. 
To this end, we adopt a recently proposed optimization framework from \cite{Chiu23JSAC} that allows for systematic design of a critical hyperparameter in decentralized learning, the \emph{mixing matrix}, to minimize a generally-defined cost measure. The choice of mixing matrix as the design parameter utilizes the observation from \cite{MATCHA19} that \emph{not all the links are equally important for convergence}. Hence, instead of communicating over all the links at the same frequency as in most of the existing works~\cite{McMahan17AISTATS,%sysml19,Ngu19INFOCOM,
Wang19JSAC}, communicating on different links with different frequencies can further improve the communication efficiency. However, the existing mixing matrix designs \cite{MATCHA19,Chiu23JSAC} fall short at addressing a critical cost measure in wireless networks: \emph{energy consumption at the busiest node}. Although energy consumption is considered in \cite{Chiu23JSAC}, its cost model only captures the total energy consumption over all the nodes. 
%As a mixing matrix  minimizing the total energy consumption may impose unbalanced loads on different nodes, a different solution is needed to ensure efficient and balanced energy consumption across nodes to better support decentralized learning over battery-powered devices. 
In this work, we address this gap based on a rigorous theoretical foundation. \looseness=-1

\subsection{Related Work}\label{subsec:Related Work}

\noindent \textbf{Decentralized learning algorithms.} The standard algorithm for learning under a fully decentralized architecture was an algorithm called Decentralized Parallel Stochastic Gradient Descent (D-PSGD)~\cite{Lian17NIPS}, which was shown to achieve the same computational complexity but a lower communication complexity than training via a central server. 
%A hybrid of the two was also proposed in \cite{Castiglia21ICLR}, where each server facilitates model aggregation among a subset of nodes and different servers exchange information in a decentralized manner.  
%FL was first proposed in the centralized architecture \cite{McMahan17AISTATS}, i.e., where a central server (parameter server) is in charge of model aggregation and all the nodes communicate with the server. Later, the decentralized architecture was proposed, under which \cite{Lian17NIPS,Jiang17NIPS} proposed Decentralized Parallel Stochastic Gradient Descent (D-PSGD) that is no worse than centralized SGD in terms of the computational complexity but much better in terms of the communication complexity at the busiest node. 
Since then a number of improvements have been developed, e.g., \cite{%D2ICML18,Lian18ICML,
ICMLhonor},  
but these works only focused on the number of iterations. \looseness=-1

\noindent\textbf{Communication cost reduction.} One line of works tried to reduce the amount of data per communication through model compression, e.g., \cite{Compression1, Compression2,Compression3, Zhang20INFOCOM}.
Another line of works reduced the frequency of communications, e.g., \cite{sysml19,Ngu19INFOCOM,
Wang19JSAC}. 
by designing an optimized frequency~\cite{sysml19,Ngu19INFOCOM} or an adaptive frequency~\cite{Wang19JSAC}. A unified analysis of the cost-convergence tradeoff of such solutions was provided in \cite{Wang21JMLR}. 
Later works \cite{Singh20CDC,Singh21JSAIT} started to combine model compression and infrequent communications. %to further improve the communication efficiency. 
Recently, it was recognized that better tradeoffs can be achieved by activating subsets of links, e.g., via event-based triggers \cite{Singh20CDC,Singh21JSAIT} or predetermined mixing matrices \cite{MATCHA19, Chiu23JSAC}. 
Our work is closest to \cite{MATCHA19,Chiu23JSAC} by also designing the mixing matrix, but we address a different objective of maximum per-node energy consumption.  \looseness=-1

\noindent\textbf{Mixing matrix design.} %Decentralized learning is closely related to the classical problem of distributed averaging, where nodes with different initial values need to achieve consensus by taking a weighted average of the values received from their neighbors. The matrix containing these weights is referred to as the \emph{mixing matrix} (a.k.a. consensus/communication/gossip matrix).
Mixing matrix design has been considered in the classical problem of distributed averaging, e.g., \cite{Boyd04,BoydGossip} 
designed a mixing matrix with the fastest convergence to $\epsilon$-average and \cite{seqW1} %,seqW2} 
designed a sequence of mixing matrices to achieve exact average in finite time. 
% \cite{Pu20} analyzed the asymptotic regime.}
\if\thisismainpaper1
In contrast, fewer works have addressed the design of mixing matrices in decentralized learning, e.g., \cite{%Neglia19INFOCOM,
MATCHA19,Chiu23JSAC}. %, Marfoq20throughput, lebar23}.
We refer to \cite{Xusheng23:report} for a more complete overview of related works. 
\else
In contrast, fewer works have addressed the design of mixing matrices in decentralized learning~\cite{Neglia19INFOCOM,
MATCHA19,Chiu23JSAC, Marfoq20throughput, lebar23}. Out of these, most focused on optimizing the training time, either by minimizing the time per iteration on computation \cite{Neglia19INFOCOM} or communication \cite{MATCHA19,Marfoq20throughput}, or by minimizing the number of iterations  \cite{lebar23}. To our knowledge, \cite{Chiu23JSAC} is the only prior work that explicitly designed mixing matrices for minimizing energy consumption. However, \cite{Chiu23JSAC} only considered the total energy consumption, but this work considers the energy consumption at the busiest node. 
Our design is based on an objective function that generalizes the spectral gap objective \cite{Vogels22} to random mixing matrices. Spectral gap is an important parameter for capturing the impact of topology on the convergence rate of decentralized learning~\cite{Lian17NIPS,%Nedic18IEEE,
Neglia19INFOCOM}. %,neglia2020decentralized,jiang2023joint}. 
Even if recent works identified some other parameters through which the topology can impact the convergence rate, such as the effective number of neighbors~\cite{Vogels22} and the neighborhood heterogeneity~\cite{lebar23}, their results did not invalidate the impact of spectral gap and just pointed out additional factors. 
%\cite{Pu20}. 
\fi

\subsection{Summary of Contributions}
We study the design of mixing matrix in decentralized learning with the following contributions: \looseness=0

1) Instead of considering the total energy consumption as in \cite{Chiu23JSAC}, our design aims at minimizing the energy consumption at the busiest node, leading to a more balanced load. 

 2) Instead of using a heuristic objective as in \cite{MATCHA19} or a partially justified objective as in \cite{Chiu23JSAC}, we use a fully theoretically-justified design objective, which enables a new approach for mixing matrix design based on graph sparsification. 
 
 3) Based on the new approach, we propose an algorithm with guaranteed performance for a special case and a greedy heuristic for the general case. Our solution achieves $54\%$--$76\%$ lower energy consumption at the busiest node while producing a model of the same quality as the best-performing benchmark in simulations based on real topology and dataset. 

\textbf{Roadmap.} Section~\ref{sec:Background and Formulation} formulates our problem, Section~\ref{sec:Proposed Solution} presents the proposed solution, Section~\ref{sec:Performance Evaluation} evaluates it against benchmarks, and Section~\ref{sec:Conclusion} concludes the paper. 
\if\thisismainpaper1
\textbf{Proofs and additional evaluation results are provided in \cite{Xusheng23:report}.}
\else
Proofs and additional evaluation results are provided in the appendix. 
\fi

\section{Background and Problem Formulation}\label{sec:Background and Formulation}

%https://math.stackexchange.com/questions/1423024/eigenvalues-of-a-complete-graph

\subsection{Decentralized Learning Algorithm}

Consider a network of $m$ nodes connected through a \emph{base topology} $G=(V, E)$ ($|V|=m$), where $E$ defines the pairs of nodes that can directly communicate. 
Each node $i\in V$ has a local objective function $F_i(\bm{x})$ that depends on the parameter vector  $\bm{x}\in \mathbb{R}^d$ and its local dataset. The goal is to minimize the global objective function %$F(\bm{x})$: \looseness=0
$%\begin{align}\label{eq:F(x)}
    F(\boldsymbol{x}) := \frac{1}{m}\sum_{i=1}^{m} F_{i}(\boldsymbol{x}).
$%\end{align}

We consider a state-of-the-art decentralized learning algorithm called D-PSGD~\cite{Lian17NIPS}. Let $\bm{x}_i^{(k)}$ ($k\geq 1$) denote the parameter vector at node $i$ after $k-1$ iterations and $g(\bm{x}_i^{(k)}; \xi_i^{(k)})$ the stochastic gradient computed in iteration $k$. D-PSGD runs the following update in parallel at each node $i$:\looseness=-1
\scale{
\begin{align}\label{eq:DecenSGD}
    \boldsymbol{x}^{(k+1)}_i = \sum_{j=1}^{m}W^{(k)}_{ij}(\boldsymbol{x}^{(k)}_j - \eta g(\boldsymbol{x}^{(k)}_j; \xi^{(k)}_j)),
\end{align}
}where $\bm{W}^{(k)}=(W^{(k)}_{ij})_{i,j=1}^m$ is the $m\times m$ \emph{mixing matrix} in iteration $k$, and $\eta>0$ is the learning rate. 
To be consistent with the base topology, $W^{(k)}_{ij}\neq 0$ only if $(i,j)\in E$. 
Finally we let $\overline{\bm{x}}^{(k)}:={1\over m}\sum_{i=1}^m \bm{x}^{(k)}_i$.

The convergence of this algorithm is guaranteed under the following assumptions:
\begin{enumerate}[(1)]
    \item Each local objective function $F_i(\boldsymbol{x})$ is $l$-Lipschitz smooth, i.e.,\footnote{For a vector $\bm{a}$, $\|\bm{a}\|$ denotes the $\ell$-2 norm. For a matrix $\bm{A}$, $\|\bm{A}\|$ denotes the spectral norm, and $\|\bm{A}\|_F$ denotes the Frobenius norm.} $\|\nabla F_i(\bm{x})-\nabla F_i(\bm{x}')\|\leq l\|\bm{x}-\bm{x}'\|,\: \forall i\in V$.
\item There exist constants $M_1, \hat{\sigma}$ such that 
    \scale{
    $\hspace{-0em} {1\over m}\sum_{i\in V}\E[\|g(\bm{x}_i;\xi_i)-\nabla F_i(\bm{x}_i) \|^2] \leq \hat{\sigma}^2 + {M_1\over m}\sum_{i\in V}\|\nabla F(\bm{x}_i)\|^2$, %\vspace{-0em} 
    $\forall \bm{x}_1,\ldots,\bm{x}_m \in \mathbb{R}^d$.
    }
    \item There exist constants $M_2, \hat{\zeta}$ such that \scale{
    ${1\over m}\sum_{i\in V}\|\nabla F_i(\bm{x})\|^2\leq \hat{\zeta}^2 + M_2\|\nabla F(\bm{x}) \|^2, \forall \bm{x} \in \mathbb{R}^d$.
    }
\end{enumerate}

\begin{theorem} \label{thm:new convergence bound}
    Let $\bm{J}:={1\over m}\bm{1} \bm{1}^\top$ and let $\bm{W}$ be a random symmetric matrix such that each row/column in $\bm{W}$ sums to one. 
    Let $\rho:= \|\E[\bm{W}^\top\bm{W}]-\bm{J}\|$.
    Under assumptions (1)--(3), if each mixing matrix $\bm{W}^{(k)}$ is an i.i.d. copy of $\bm{W}$ 
    and $\rho < 1$, then D-PSGD can achieve $ \frac{1}{K} \sum_{k=1}^K \mathbbm{E}[\|\nabla F(\boldsymbol{\overline{x}}^{(k)})\|^2]\leq \epsilon$ for any given $\epsilon>0$ ($\overline{\bm{x}}^{(k)}:={1\over m}\sum_{i=1}^m \bm{x}^{(k)}_i$) when the number of iterations reaches
\scale{
\begin{align}
K(\rho)&:= l(F(\overline{\bm{x}}^{(1)})-F_{\inf}) \nonumber\\
&\hspace{-2.75em}   \cdot \hspace{-.15em}O\hspace{-.25em}\left(\hspace{-.25em}{\hat{\sigma}^2\over m\epsilon^2} \hspace{-.15em}+\hspace{-.15em}{\hat{\zeta}\sqrt{M_1+1}+\hat{\sigma}\sqrt{1-\rho}\over (1-\rho) \epsilon^{3/2}} \hspace{-.15em}+\hspace{-.15em} {\sqrt{(M_2+1)(M_1+1)}\over (1-\rho)\epsilon} \hspace{-.05em}\right). \nonumber %\label{eq:new bound on K}
\end{align}
}
\end{theorem}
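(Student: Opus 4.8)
The plan is to adapt the standard two-sequence analysis of decentralized SGD to random mixing matrices, with the contraction of the consensus error governed entirely by $\rho$. First I would collect the per-node iterates into a matrix $\bm{X}^{(k)}:=[\bm{x}_1^{(k)},\ldots,\bm{x}_m^{(k)}]^\top\in\mathbb{R}^{m\times d}$ and the stochastic gradients into $\bm{G}^{(k)}$ (row $i$ being $g(\bm{x}_i^{(k)};\xi_i^{(k)})^\top$), so that \eqref{eq:DecenSGD} reads $\bm{X}^{(k+1)}=\bm{W}^{(k)}(\bm{X}^{(k)}-\eta\bm{G}^{(k)})$. Since $\bm{W}^{(k)}$ is symmetric with unit row sums, $\bm{W}^{(k)}\bm{1}=\bm{1}$, hence $\bm{W}^{(k)}\bm{J}=\bm{J}\bm{W}^{(k)}=\bm{J}$; this immediately gives the invariance $\overline{\bm{x}}^{(k+1)}=\overline{\bm{x}}^{(k)}-\eta\,\overline{\bm{g}}^{(k)}$ of the network average, where $\overline{\bm{g}}^{(k)}:=\frac{1}{m}\sum_i g(\bm{x}_i^{(k)};\xi_i^{(k)})$.

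The analysis then rests on two coupled quantities: the objective value of the average iterate $\E[F(\overline{\bm{x}}^{(k)})]$ and the consensus error $\Omega_k:=\E[\|(\bm{I}-\bm{J})\bm{X}^{(k)}\|_F^2]=\E[\sum_i\|\bm{x}_i^{(k)}-\overline{\bm{x}}^{(k)}\|^2]$. Using $l$-smoothness of $F$ together with the average update, I would derive a descent inequality of the form $\E[F(\overline{\bm{x}}^{(k+1)})]\le\E[F(\overline{\bm{x}}^{(k)})]-\frac{\eta}{2}\E[\|\nabla F(\overline{\bm{x}}^{(k)})\|^2]+(\text{terms of order }\eta\text{ and }\eta^2)$, in which the discrepancy between $\nabla F(\overline{\bm{x}}^{(k)})$ and $\frac{1}{m}\sum_i\nabla F_i(\bm{x}_i^{(k)})$ is controlled by $\frac{l^2}{m}\Omega_k$ (again via smoothness) and the stochastic noise is controlled by assumption (2). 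This is the routine part.

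The crux, and the place where $\rho$ enters, is the recursion for $\Omega_k$. Projecting the matrix update onto the complement of $\bm{1}$ and using $(\bm{I}-\bm{J})\bm{W}^{(k)}=\bm{W}^{(k)}-\bm{J}$ gives $(\bm{I}-\bm{J})\bm{X}^{(k+1)}=(\bm{W}^{(k)}-\bm{J})(\bm{X}^{(k)}-\eta\bm{G}^{(k)})$. Because $\bm{W}^{(k)}$ is an i.i.d. copy of $\bm{W}$ independent of $(\bm{X}^{(k)},\bm{G}^{(k)})$, conditioning on the past and taking expectation over $\bm{W}^{(k)}$ first yields $\E_{\bm{W}}[\|(\bm{W}-\bm{J})\bm{A}\|_F^2]=\operatorname{tr}(\bm{A}^\top\E[(\bm{W}-\bm{J})^\top(\bm{W}-\bm{J})]\bm{A})$ for $\bm{A}$ measurable w.r.t. the past. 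The key algebraic identity, using $\bm{W}\bm{J}=\bm{J}\bm{W}=\bm{J}^2=\bm{J}$, is $(\bm{W}-\bm{J})^\top(\bm{W}-\bm{J})=\bm{W}^\top\bm{W}-\bm{J}$, so $\E[(\bm{W}-\bm{J})^\top(\bm{W}-\bm{J})]=\E[\bm{W}^\top\bm{W}]-\bm{J}$ has spectral norm exactly $\rho$. Since $(\bm{W}-\bm{J})\bm{A}$ depends only on $(\bm{I}-\bm{J})\bm{A}$, this bounds the first moment by $\rho\,\|(\bm{I}-\bm{J})\bm{A}\|_F^2$. Applying Young's inequality with parameter $\gamma$ chosen so that $\rho(1+\gamma)=\frac{1+\rho}{2}<1$ then produces a contractive recursion $\Omega_{k+1}\le\frac{1+\rho}{2}\Omega_k+\frac{O(\eta^2)}{1-\rho}\,\E[\|(\bm{I}-\bm{J})\bm{G}^{(k)}\|_F^2]$, and the residual gradient term is bounded via assumptions (2)--(3) and smoothness by $\hat\sigma^2$, $\hat\zeta^2$, and $\|\nabla F(\overline{\bm{x}}^{(k)})\|^2$ (plus a further $\frac{l^2}{m}\Omega_k$ contribution). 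The hypothesis $\rho<1$ is precisely what makes this recursion stable.

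Finally I would sum both inequalities over $k=1,\ldots,K$, unroll the geometric recursion for $\sum_k\Omega_k$ (contributing a factor $\frac{1}{1-\rho}$), and substitute back into the telescoped descent inequality to obtain $\frac{1}{K}\sum_k\E[\|\nabla F(\overline{\bm{x}}^{(k)})\|^2]\le\frac{l(F(\overline{\bm{x}}^{(1)})-\Finf)}{\eta K}\cdot O(1)+O(\eta)+\frac{O(\eta^2)}{1-\rho}$, valid once $\eta$ is small enough (of order $1-\rho$ relative to the smoothness and heterogeneity constants) to absorb the consensus coupling into the $-\frac{\eta}{2}\|\nabla F\|^2$ term. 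Balancing the statistical noise term (scaling as $\eta\,\hat\sigma^2/m$), the heterogeneity/consensus term (scaling as $\eta^2/(1-\rho)$), and the optimization term $1/(\eta K)$, and solving for the $K$ that drives the bound below $\epsilon$, yields the three summands of $K(\rho)$ with their respective powers of $\epsilon$. The main obstacle is the third step: cleanly isolating $\E[\bm{W}^\top\bm{W}]-\bm{J}$ from the random update via independence, and then verifying that the $\eta^2/(1-\rho)$ consensus term can be dominated by the descent term under a \emph{single} admissible choice of $\eta$; the remaining learning-rate optimization is standard bookkeeping.
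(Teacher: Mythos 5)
Your proposal is correct in outline but takes a genuinely different route from the paper. The paper does not re-derive the convergence analysis at all: it invokes \cite[Theorem~2]{Koloskova20ICML} as a black box, which gives the iteration bound in terms of the contraction parameter $p$ defined by $\E[\|\bm{X}\bm{W}^{(k)}-\bm{X}\bm{J}\|_F^2]\le(1-p)\|\bm{X}-\bm{X}\bm{J}\|_F^2$, and then proves a single lemma showing that for i.i.d.\ symmetric mixing matrices with unit row/column sums one has exactly $p=1-\rho$; substituting $p=1-\rho$ into the cited bound gives $K(\rho)$ immediately. The algebraic kernel of that lemma is precisely the step you identify as the crux: $(\bm{W}-\bm{J})\bm{A}=(\bm{W}-\bm{J})(\bm{I}-\bm{J})\bm{A}$ together with $\E[(\bm{W}-\bm{J})^\top(\bm{W}-\bm{J})]=\E[\bm{W}^\top\bm{W}]-\bm{J}$, whose spectral norm is $\rho$ (the paper gets the inequality direction it needs from \cite[Lemma~1]{MATCHA19}). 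So your independence/trace argument is sound and coincides with the paper's Lemma on the only point where the two proofs overlap. What your route buys is self-containedness and transparency about where $\rho<1$ enters (stability of the consensus recursion); what it costs is that the final bookkeeping --- absorbing the $M_1\|\nabla F(\bm{x}_i)\|^2$ term from assumption (2) and the $M_2$ term from assumption (3) so as to recover the exact three summands $\hat\sigma^2/(m\epsilon^2)$, $(\hat\zeta\sqrt{M_1+1}+\hat\sigma\sqrt{1-\rho})/((1-\rho)\epsilon^{3/2})$, and $\sqrt{(M_2+1)(M_1+1)}/((1-\rho)\epsilon)$ --- is nontrivial and only sketched in your last paragraph; that is exactly the labor the paper outsources to the citation. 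If you intend your proof to stand alone, you would need to carry out that absorption explicitly (or, more economically, stop after establishing the contraction $\E[\|\bm{X}(\bm{W}-\bm{J})\|_F^2]\le\rho\|\bm{X}(\bm{I}-\bm{J})\|_F^2$ and cite the Koloskova et al.\ theorem, which is what the paper does).
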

\emph{Remark:} The required number of iterations $K(\rho)$ depends on the mixing matrix only through the parameter $\rho$: the smaller $\rho$, the fewer iterations are needed. 

The proof of Theorem~\ref{thm:new convergence bound} is based on \cite[Theorem~2]{Koloskova20ICML} and included in Appendix~\ref{subsec:Supporting Proofs}.

\subsection{Mixing Matrix }

As node $i$ needs to send its parameter vector to node $j$ in iteration $k$ only if $W^{(k)}_{ij}\neq 0$, we can control the communications by designing the mixing matrix $\bm{W}^{(k)}$. To this end, we use \looseness=-1
%\begin{align}\label{eq:W}
$    \boldsymbol{W}^{(k)} := \boldsymbol{I} -  \boldsymbol{L}^{(k)},$
%\end{align}
where $\boldsymbol{L}^{(k)}$ is the weighted Laplacian matrix~\cite{Bollobas13} of the topology $G^{(k)}=(V,E^{(k)})$ \emph{activated} in iteration $k$. 
Given the incidence matrix\footnote{Matrix $\bm{B}$ is a $|V| \times |E|$ matrix, defined as $B_{ij}=1$ if link $e_j$ starts at node $i$ (under arbitrary link orientation), $-1$ if $e_j$ ends at $i$, and $0$ otherwise.} $\bm{B}$ of the base topology $G$ and a vector of \emph{link weights} $\bm{\alpha}^{(k)}$, the Laplacian matrix $\bm{L}^{(k)}$ is given by
%\begin{equation}
$\boldsymbol{L}^{(k)}=\boldsymbol{B} \operatorname{diag}(\boldsymbol{\alpha}^{(k)}) \boldsymbol{B}^{T}.$
%\label{eq:L=BaB}
%\end{equation}
%When $\bm{\alpha}^{(k)} = \bm{1}$, we obtain the unweighted Laplacian matrix of the base topology $G$. 
%Together, \eqref{eq:W} and \eqref{eq:L=BaB} 
The above reduces the mixing matrix design problem to a problem of designing the link weights $\bm{\alpha}^{(k)}$, where {a link $(i,j)\in E$ will be activated in iteration $k$} \emph{if and only if $\alpha_{(i,j)}^{(k)}\neq 0$}. 
This construction guarantees that $\bm{W}^{(k)}$ is symmetric with each row/column summing up to one. %, as required by D-PSGD~\cite{Lian17NIPS}. 

\subsection{Cost Model}

We use $c(\bm{\alpha}^{(k)}):=(c_i(\bm{\alpha}^{(k)}))_{i=1}^m$ to denote the cost vector in an iteration when the link weight vector is $\bm{\alpha}^{(k)}$. We focus on the energy consumption at each node $i$, which contains two parts: (i) computation energy $c_i^a$ for computing the local stochastic gradient and the local aggregation, and (ii) communication energy $c_{ij}^b$ for sending the updated local parameter vector from node $i$ to node $j$. Then the energy consumption at node $i$ in iteration $k$ is modeled as
\looseness=0 
\scale{
\begin{align}\label{eq:cost definition}
c_i(\bm{\alpha}^{(k)}) := c^a_i + \sum_{j: (i,j)\in E} c^b_{ij} \mathbbm{1}(\alpha_{(i,j)}^{(k)}\neq 0),
\end{align}
}where $\mathbbm{1}(\cdot)$ denotes the indicator function. This cost function models the basic scenario where all communications are point-to-point and independent.  Other scenarios are left to future work. \looseness=-1

\subsection{Optimization Framework}

To trade off between the cost per iteration and the convergence rate, we adopt a bi-level optimization framework:

\noindent \textbf{Lower-level optimization:}~design link weights $\bm{\alpha}$ to maximize the convergence rate (by minimizing $\rho_\Delta$) under a given budget $\Delta$ on the maximum cost per node in each iteration, which results in a required number of iterations of $K(\rho_\Delta)$. 

\noindent \textbf{Upper-level optimization:}~design $\Delta$ to minimize the total maximum cost per node ${\Delta}\cdot K(\rho_\Delta)$.

\section{Mixing Matrix Design via Graph Sparsification}\label{sec:Proposed Solution}

As the upper-level optimization only involves one scalar decision variable, we will focus on the lower-level optimization.\looseness=-1

\subsection{Simplified Objective}

% The parameter $p$ that minimizes the required number of iterations $K(p)$ is given by\looseness=-1
% \scale{
% \begin{align}
%     p := \min_{\bm{X}\neq \bm{0}} \left(1-{\E[\|\bm{X}(\bm{W}-\bm{J})\|_F^2]\over \|\bm{X}(\bm{I}-\bm{J})\|_F^2}\right).\label{eq:p-maximization objective}
% \end{align}
% }As \eqref{eq:p-maximization objective} is not an explicit function of the mixing matrix, we first relate it to an equivalent quantity that is easier to handle. 
% % Define $\bm{J}:={1\over m}\bm{1} \bm{1}^\top$ as an $m\times m$ matrix with all entries being ${1\over m}$. 
% %Suppose that the mixing matrices for different iterations are i.i.d. with the same distribution as $\bm{W}$. 
% We can relate $p$ to an explicit function of $\bm{W}$ as follows.  

% \begin{lemma}\label{lem:p=1-rho}
% For any randomized mixing matrix $\bm{W}$ that is symmetric with every row/column summing to one, $p$ defined in \eqref{eq:p-maximization objective} satisfies $p = 1-\rho$ for $\rho := \|\E[\bm{W}^\top\bm{W}]-\bm{J}\|$. 
% \end{lemma}

Theorem~\ref{thm:new convergence bound} implies that the lower-level optimization should minimize $\rho$. While it is possible to formulate this minimization in terms of the link weights $\bm{\alpha}$, the resulting optimization problem, with a form similar to \cite[(18)]{Chiu23JSAC}, will be intractable due to the presence of non-linear matrix inequality constraint. We thus further simplify the objective as follows. 

\begin{lemma}\label{lem:rho upper-bound}
For any mixing matrix $\bm{W}:=\bm{I}-\bm{L}$, where $\bm{L}$ is a randomized Laplacian matrix, 
\scale{
\begin{align}
\rho &\leq \E\left[\|\bm{I}-\bm{L}-\bm{J}\|^2\right] \label{eq:rho upper-bound matrix}\\
&= \E\left[\max\left((1-\lambda_2(\bm{L}))^2, (1-\lambda_m(\bm{L}))^2\right) \right], \label{eq:rho upper-bound eigen}
\end{align}
}
where $\lambda_i(\bm{L})$ denotes the $i$-th smallest eigenvalue of $\bm{L}$. 
%``$=$'' is achieved when $\bm{L}$ is deterministic. 
\end{lemma}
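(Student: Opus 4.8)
The plan is to handle the two assertions of the lemma—the bound \eqref{eq:rho upper-bound matrix} and the eigenvalue identity \eqref{eq:rho upper-bound eigen}—in turn.

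For the bound, I would begin from $\rho = \|\E[\bm{W}^\top\bm{W}] - \bm{J}\|$ and seek to rewrite the matrix inside the norm as a square. The crucial step is the algebraic identity
\[
\bm{W}^\top\bm{W} - \bm{J} = (\bm{W} - \bm{J})^\top(\bm{W} - \bm{J}),
\]
which I would verify by expanding the right-hand side. The cross terms vanish because $\bm{W}$ has every row and column summing to one: with $\bm{J} = \tfrac{1}{m}\bm{1}\bm{1}^\top$, the relations $\bm{W}\bm{1} = \bm{1}$ and $\bm{1}^\top\bm{W} = \bm{1}^\top$ give $\bm{W}^\top\bm{J} = \bm{J}\bm{W} = \bm{J}$, and $\bm{J}^2 = \bm{J}$, so everything collapses to $\bm{W}^\top\bm{W} - \bm{J}$. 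Taking expectation over the randomness of $\bm{W}$ then yields $\rho = \|\E[(\bm{W}-\bm{J})^\top(\bm{W}-\bm{J})]\|$, and applying Jensen's inequality for the (convex) spectral norm gives $\rho \le \E[\|(\bm{W}-\bm{J})^\top(\bm{W}-\bm{J})\|]$. The standard identity $\|\bm{A}^\top\bm{A}\| = \|\bm{A}\|^2$ turns the right-hand side into $\E[\|\bm{W}-\bm{J}\|^2]$, and substituting $\bm{W} = \bm{I}-\bm{L}$ produces \eqref{eq:rho upper-bound matrix}.

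For the identity \eqref{eq:rho upper-bound eigen}, by linearity of expectation it suffices to evaluate $\|\bm{I}-\bm{L}-\bm{J}\|^2$ for each realization of $\bm{L}$. I would diagonalize $\bm{L}$: as a symmetric Laplacian it has $\lambda_1(\bm{L}) = 0$ with eigenvector $\bm{1}$, and its remaining eigenvectors span $\bm{1}^\perp$. On $\bm{1}$ the matrix $\bm{I}-\bm{L}-\bm{J}$ acts as $\bm{1}-\bm{0}-\bm{1}=\bm{0}$, while on any eigenvector $\bm{v}\perp\bm{1}$ with $\bm{L}\bm{v}=\lambda_i(\bm{L})\bm{v}$ we have $\bm{J}\bm{v}=\bm{0}$, so it acts as multiplication by $1-\lambda_i(\bm{L})$. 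Thus the spectrum of the symmetric matrix $\bm{I}-\bm{L}-\bm{J}$ is $\{0\}\cup\{1-\lambda_i(\bm{L})\}_{i=2}^m$, its spectral norm equals the largest of these in absolute value, and since $\lambda\mapsto|1-\lambda|$ is convex on $[\lambda_2(\bm{L}),\lambda_m(\bm{L})]$ the maximum is attained at an endpoint, giving $\max(|1-\lambda_2(\bm{L})|,|1-\lambda_m(\bm{L})|)$. Squaring and taking expectation gives \eqref{eq:rho upper-bound eigen}.

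I do not anticipate a serious difficulty; the argument is short once the decomposition is spotted. The step needing the most care is the identity $\bm{W}^\top\bm{W}-\bm{J}=(\bm{W}-\bm{J})^\top(\bm{W}-\bm{J})$, since it is precisely the stochasticity of $\bm{W}$ (row and column sums equal to one) that makes the cross terms reduce to $\bm{J}$ and hence makes the subsequent Jensen bound usable; I would double-check each cross term explicitly. A minor point to state clearly is the convexity argument reducing the maximum over all $m-1$ eigenvalues $1-\lambda_i(\bm{L})$ to the two extreme indices $i=2$ and $i=m$.
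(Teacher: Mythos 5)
Your proof is correct and follows essentially the same route as the paper's: Jensen's inequality for the convex spectral norm, the identity $\|\bm{W}^\top\bm{W}-\bm{J}\|=\|\bm{W}-\bm{J}\|^2$, and the eigenvalue characterization of $\|\bm{I}-\bm{L}-\bm{J}\|$. The only difference is that you prove the two intermediate facts explicitly (via the factorization $(\bm{W}-\bm{J})^\top(\bm{W}-\bm{J})=\bm{W}^\top\bm{W}-\bm{J}$ and a direct diagonalization of $\bm{I}-\bm{L}-\bm{J}$) where the paper asserts the first by eigenvalue decomposition and cites Lemma~IV.2 of \cite{Chiu23JSAC} for the second.
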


By Lemma~\ref{lem:rho upper-bound}, we relax the objective of the lower-level optimization to designing a randomized $\bm{\alpha}$ by solving 
\scale{
\begin{subequations}\label{eq:min rho bound}
    \begin{align}
\min_{\bm{\alpha}} &\quad \E\left[\|\bm{I}-\bm{B} \diag(\bm{\alpha})\bm{B}^\top-\bm{J}\|^2\right]\\
\mbox{s.t. } 
&\quad \E[c_i(\bm{\alpha})] \leq \Delta, ~~~\forall i\in V. \label{eq: budget constraint}
    \end{align}
\end{subequations}
}

\subsection{Idea on Leveraging Graph Sparsification}

We propose to solve the relaxed lower-level optimization \eqref{eq:min rho bound} based on graph (spectral) sparsification. 
First, we compute the optimal link weight vector $\bm{\alpha}'$ without the budget constraint \eqref{eq: budget constraint} by solving the following optimization:
\scale{
\begin{subequations}\label{eq:min rho wo cost}
\begin{align}
& \min_{\bm{\alpha}}\:  \tilde{\rho} \label{wo cost:obj}\\
\mbox{s.t. }
& - \tilde{\rho}\bm{I} \preceq \bm{I} - \bm{B} \diag(\bm{\alpha}) \bm{B}^\top - \bm{J} \preceq  \tilde{\rho}\bm{I}. \label{wo cost:matrix}%\\
%& \bm{\alpha}\geq \bm{0}, \label{wo cost:lower}\\
%& \tilde{\bm{B}}\bm{\alpha}\leq \bm{1}, \label{wo cost:upper}
\end{align}
\end{subequations}
}%where $\tilde{\rho}$ is a dependent variable.  
%and $\tilde{\bm{B}}$ is the unoriented incidence matrix\footnote{That is, $\tilde{B}_{ij}=1$ if node $i$ is incident with link $e_j$ and $0$ otherwise.} of the base topology $G$. 
Constraint \eqref{wo cost:matrix} ensures $\tilde{\rho} = \|\bm{I}-\bm{B}\diag(\bm{\alpha})\bm{B}^\top -\bm{J}\|$ at the optimum, i.e., $\bm{\alpha}'$ minimizes $\|\bm{I}-\bm{B} \diag(\bm{\alpha})\bm{B}^\top -\bm{J}\|$. 
%and constraints \eqref{wo cost:lower}--\eqref{wo cost:upper} ensure that the resulting mixing matrix $\bm{I} - \bm{B} \diag(\bm{\alpha}) \bm{B}^\top$ is doubly stochastic. 
Optimization \eqref{eq:min rho wo cost} is a semi-definite programming (SDP) problem that can be solved in polynomial time by existing algorithms \cite{Jiang20FOCS}. 
The vector $\bm{\alpha}'$ establishes a lower bound on the relaxed objective: 
%Clearly, the minimum of \eqref{eq:rho upper-bound matrix} under the budget constraint is lower-bounded by the corresponding value for $\bm{\alpha}'$. \Xusheng{I am not very sure that this is obvious.} That is, 
if 
$\bm{\alpha}^*$ is the optimal randomized solution for \eqref{eq:min rho bound}, 
then %the optimal objective value is lower-bounded as 
%\begin{align}
$\E\left[\|\bm{I}-\bm{B} \diag(\bm{\alpha}^*)\bm{B}^\top-\bm{J}\|^2\right] %\nonumber\\
\geq \|\bm{I}-\bm{B} \diag(\bm{\alpha}')\bm{B}^\top -\bm{J} \|^2$. \looseness=-1 %\label{eq:lower-level bound}
%\end{align}
%\end{lemma}

Then, we use a graph sparsification algorithm to sparsify the weighted graph with link weights $\bm{\alpha}'$ to satisfy the budget constraint. 
As $\|\bm{I}-\bm{B} \diag(\bm{\alpha}')\bm{B}^\top -\bm{J} \|^2$ $= \max\big((1-\lambda_2(\bm{B} \diag(\bm{\alpha}')\bm{B}^\top))^2,$ $(1-\lambda_m(\bm{B} \diag(\bm{\alpha}')\bm{B}^\top))^2\big)$, and graph sparsification aims at preserving the original eigenvalues  $\left(\lambda_i(\bm{B} \diag(\bm{\alpha}')\bm{B}^\top)\right)_{i=1}^m$~\cite{Spars08}, the sparsified link weight vector $\bm{\alpha}_s$ is expected to achieve an objective value $\|\bm{I}-\bm{B} \diag(\bm{\alpha}_s)\bm{B}^\top -\bm{J} \|^2$ that approximates the optimal for \eqref{eq:min rho bound}. \looseness=-1
%the lower bound in \eqref{eq:lower-level bound} and hence the optimal objective value. 

\subsection{Algorithm Design}

We now apply the above idea to develop algorithms for mixing matrix design.  

\subsubsection{Ramanujan-Graph-based Design for a Special Case}\label{subsubsec:Ramanujan}

Consider the special case when the base topology $G$ is a complete graph and all transmissions by a node have the same cost, i.e., $c_{ij}^b\equiv c_i^b$ for all $j$ such that $(i,j)\in E$. Let $d:=\min_{i\in V}\lfloor (\Delta-c_i^a)/c_i^b \rfloor$. Then any graph with degrees bounded by $d$ satisfies the budget constraint. 
%\ting{the basis should be a dense matrix minimizing \eqref{eq:rho upper-bound} (without budget constraint), and then sparsification is used to approximate it while satisfying a given per-node budget constraint $\Delta$. Even if $G$ is a complete graph, sparsification should be applied on a weighted complete graph. }
The complete graph has an ideal sparsifier known as \emph{Ramanujan graph}. %to design the activated topology.
%A Ramanujan graph is known to be the ``ideal spectral sparsifier'' for a complete graph. 
A $d$-regular graph $H$ is a Ramanujan graph if all the non-zero eigenvalues of its Laplacian matrix $\bm{L}_H$ lie between $d-2\sqrt{d-1}$ and $d+2\sqrt{d-1}$ \cite{hoory06}.
By assigning weight ${1}/{d}$ to every link of a Ramanujan graph $H$, we obtain a weighted graph $H'$, whose Laplacian $\bm{L}_{H'}$ satisfies $\lambda_1(\bm{L}_{H'}) = 0$ and \looseness=0
\scale{
\begin{align}
1-{2\sqrt{d-1}\over d}\leq \lambda_2(\bm{L}_{H'})\le\cdots \leq 
    \lambda_m(\bm{L}_{H'}) \le 1+ {2\sqrt{d-1}\over d}. \nonumber
%    {d-2\sqrt{d-1}\over d+2\sqrt{d-1}} \le \lambda_2(\bm{L}_{H'})\le\cdots \leq     \lambda_m(\bm{L}_{H'}) \le 1.
\end{align}
}By Lemma~\ref{lem:rho upper-bound}, the deterministic mixing matrix  $\bm{W}_{H'} := \bm{I}-\bm{L}_{H'}$ %, corresponding to link weights $\alpha_{(i,j)}=\frac{1}{d+2\sqrt{d-1}}$ if $(i,j)$ is in $H$ and $\alpha_{(i,j)}=0$ otherwise, 
achieves a $\rho$-value $\rho_{H'}$ that satisfies
\scale{
\begin{align}
\rho_{H'} %= \|\bm{W}_{H'}^2-\bm{J}\| 
&\leq \max\left((1-\lambda_2(\bm{L}_{H'}))^2, (1-\lambda_m(\bm{L}_{H'}))^2 \right) \nonumber\\
&\leq {4(d-1)\over d^2}
%\left({4\sqrt{d-1}\over d+ 2\sqrt{d-1}} \right)^2 
= O\left({1\over d}\right) = O\left({1\over \Delta}\right).\nonumber
\end{align}
}Ramanujan graphs can be easily constructed by drawing random $d$-regular graphs until satisfying the Ramanujan definition \cite{Friedman03}. By the result of \cite{KimVu03, PuWormald17}, for $d =o(\sqrt{m})$, we can generate random $d$-regular graphs in polynomial time. Thus, the above method can efficiently construct a deterministic mixing matrix with guaranteed performance in solving the lower-level optimization for a given budget $\Delta$ such that $d =o(\sqrt{m})$. 

\subsubsection{Intractability for General Case}
We will see that finding a feasible graph sparsifier is computationally hard in the general case.  
To facilitate the discussions, assume for all $i$ and $j$, $c_i^a = 0$ and $c_{ij}^b = 1$, so the budget constraint \eqref{eq: budget constraint} translates to a maximum degree constraint.
For a general base topology $G$ and any fixed $\Delta$, 
it is not clear whether there exists a feasible Laplacian $\bm{L}$ satisfying the maximum degree constraint (recall that for $\bm{L}$ to be feasible, its convergence parameter $\rho$ for $\bm{W}=\bm{I}-\bm{L}$ must be strictly less than $1$). For example, a subgraph of a star graph needs $m-1$ edges incident to the center to remain connected, and if $\Delta< m-1$, then any subgraph with maximum degree at most $\Delta$ is disconnected, which implies $\lambda_2(\bm{L}) = 0$.
Hence, through similar computations as in \eqref{eq:rho upper-bound matrix}--\eqref{eq:rho upper-bound eigen}, 
for a deterministic $\bm{L}$,
% $$p = 1-\rho(\bm{L}) 
% = 1 - \|\bm{I}-\bm{L}-\bm{J}\|^2 
% \le 1 - (1 - \lambda_2(\bm{L})) = 0.$$
\begin{align}
\rho(\bm{W}) = \|\bm{I}-\bm{L}-\bm{J}\|^2 \ge (1 - \lambda_2(\bm{L}))^2 = 1. \label{eq:necessary condition}
\end{align}
Moreover, in general, the task of determining whether there exists a feasible $\bm{L}$  satisfying the maximum degree constraint is NP-hard
because deciding the existence of a connected spanning subgraph with maximum degree no more than $\Delta$ is NP-hard. 
% \textcolor{red}{if needed, proof will be added soon, based on the previous NP-hardness proof.}
The following theorem provides NP-hardness for a slightly more general problem; its proof is provided in Appendix~\ref{subsec:Supporting Proofs}.
\begin{theorem}\label{thm:nphard}
Given a graph $G_{cand}=(V,E_{cand})$, and a degree constraint $d_i$ for each vertex $v_i\in V$,
then it is NP-hard to decide the existence of $E\subseteq E_{cand}$ such that $G=(V,E)$ is a connected graph and $\deg_G(v_i)\le d_i$ for all vertices $v_i\in V$.
\end{theorem}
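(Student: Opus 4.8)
The plan is to prove NP-hardness by a polynomial-time reduction from the \textbf{Hamiltonian Path} problem on undirected graphs, which is classically NP-complete. Write DCCS (degree-constrained connected subgraph) for the decision problem in the theorem: given $G_{cand}=(V,E_{cand})$ and integers $d_i$, decide whether some $E\subseteq E_{cand}$ makes $G=(V,E)$ connected with $\deg_G(v_i)\le d_i$ for all $v_i\in V$. The reduction is essentially trivial on the instance side: given a Hamiltonian Path instance $H=(V,E_H)$, I set $G_{cand}:=H$ and $d_i:=2$ for every $v_i\in V$. This construction is clearly polynomial (indeed linear) in the size of $H$, and the crux of the proof is the correctness argument rather than the transformation itself.

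The correctness rests on the structural characterization of connected graphs of maximum degree $2$. First I would handle the forward direction: if $H$ admits a Hamiltonian path $P$, then viewing $P$ as an edge set $E\subseteq E_H$, the subgraph $(V,E)$ spans $V$, is connected, and every vertex has degree at most $2$, so the DCCS instance is a YES instance. For the backward direction, suppose DCCS has a solution $E\subseteq E_H$, so $(V,E)$ is connected with all degrees $\le 2$. A connected graph on at least two vertices in which every vertex has degree in $\{0,1,2\}$ can have no isolated vertex, hence all degrees are $1$ or $2$; such a graph is either a simple path or a simple cycle spanning $V$. In the path case it is a Hamiltonian path of $H$ directly; in the cycle case, deleting any single edge yields a Hamiltonian path. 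Either way $H$ has a Hamiltonian path, so the two directions together show the DCCS instance is a YES instance if and only if $H$ has a Hamiltonian path, establishing that DCCS is at least as hard as Hamiltonian Path and hence NP-hard.

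The main subtlety — the only place requiring care — is the cycle case of the backward direction: the bound $d_i=2$ does not by itself forbid a spanning cycle, so I must explicitly observe that a Hamiltonian cycle can be truncated to a Hamiltonian path by removing one edge. This observation is exactly what lets the reduction come \emph{from} Hamiltonian Path rather than forcing a reduction from Hamiltonian Cycle with an auxiliary gadget. Once it is in place the argument is purely combinatorial, with no quantitative estimates needed. Finally, I would remark that the theorem's generality (arbitrary per-vertex bounds $d_i$) is not needed for hardness — the uniform choice $d_i\equiv 2$ already suffices — so the general statement follows \emph{a fortiori}, since the special instances produced by the reduction are a subset of all admissible DCCS instances.
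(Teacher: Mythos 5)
Your proof is correct and follows essentially the same route as the paper's: a reduction from Hamiltonian Path with $G_{cand}$ equal to the input graph and uniform degree bounds $d_i=2$, using the fact that a connected graph of maximum degree $2$ is a spanning path or cycle, either of which yields a Hamiltonian path. No gaps; the explicit handling of the cycle case matches the paper's argument.
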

Finding a feasible graph sparsifier under degree constraints is equivalent to finding a connected spanning subgraph under the same constraints, as 
\[\rho(\bm{W})= \max\left((1-\lambda_2(\bm{L}))^2, (1-\lambda_m(\bm{L}))^2\right)\]
and we can set the link weights to be small enough so that $\lambda_m(\bm{L})<2$, under which $\rho(\bm{W})<1$ if and only if $\lambda_2(\bm{L})>0$. 
Therefore, for a general base topology with general costs and a general budget constraint, 
it is algorithmically intractable to find a feasible graph sparsifier.

\subsubsection{Greedy Heuristic for General Case}

For the general case, ideally we want to sparsify a weighted graph $G'$ with link weights $\bm{\alpha}'$ such that the sparsified graph with link weights $\bm{\alpha}_s$ will approximate the eigenvalues of $G'$ while satisfying the constraint $c_i(\bm{\alpha}_s)\leq \Delta$ for each $i\in V$. While this remains an NP-hard problem for general graphs, we propose a greedy heuristic based on the intuition that the \emph{importance of a link is reflected in its {absolute} weight}. Specifically, we will find the link $(i,j)$ with the minimum absolute weight according to the solution to \eqref{eq:min rho wo cost} such that the cost for either node $i$ or node $j$ exceeds the budget $\Delta$, set $\alpha_{(i,j)}=0$, and then find the next link by re-solving \eqref{eq:min rho wo cost} under this additional constraint,  
%go through the links in ascending order of $\alpha'_{(i,j)}$, and set its weight to $0$ if the cost for either node $i$ or node $j$ exceeds the budget $\Delta$, 
until either all the nodes satisfy the budget or the graph becomes disconnected;
in the latter case, the algorithm reports failure to find a sparsifier under budget $\Delta$.
\looseness=-1

\section{Performance Evaluation}\label{sec:Performance Evaluation}

We evaluate the proposed solution for the general case based on a real dataset and the topology of a real wireless network. We defer the evaluation in the special case to 
\if\thisismainpaper1
\cite{Xusheng23:report}.
\else
Appendix~\ref{subsec:Additional Evaluation}. 
\fi

\emph{Experiment setting:} 
% data and model
We consider training for image classification based on CIFAR-10, which consists of 60,000 color images in 10 classes. We train the ResNet-50 model over its training dataset with 50,000 images, and then test the trained model over the testing dataset with 10,000 images. 
% base topology
We use the topology of Roofnet~\cite{Roofnet} at data rate 1 Mbps as the base topology, which contains 33 nodes and 187 links.
% cost model
To evaluate the cost, we set the computation energy as $c^a_i = 0.0003342$ (Wh) and the communication energy as $c^b_{ij} = 0.0138$ (Wh) based on our parameters and the parameters from \cite{Carbon2021}\footnote{Our model size is $S=2.3$MB, batch size is 32, and processing speed is 8ms per sample. Assuming 1Mbps links and TX2 as the hardware, whose power is 4.7W during computation and 1.35W during communication~\cite{Carbon2021}, we estimate the computation energy by $c^a_i = 4.7*32*0.008/3600\approx 0.0003342$Wh, and the communication energy with each neighbor by $c^b_{ij}=2*1.35*S*8/1/3600$Wh, where the multiplication by 2 is because this testbed uses WiFi, which is half-duplex.}.
% hyperparameters
Following \cite{Chiu23JSAC}, we set the learning rate as 0.8 at the beginning and reduce it by 10X after 100, 150, 180, 200 epochs, and the mini-batch size to 32. 

\emph{Benchmarks:}
We compare the proposed solution with with four benchmarks: `Vanilla D-PSGD' \cite{Lian17NIPS} where all the neighbors communicate in all the iterations, `Periodic' where all the neighbors communicate periodically, `MATCHA'~\cite{MATCHA19} which was designed to minimize training time, and Algorithm~1 in \cite{Chiu23JSAC} (`Greedy total' \if\thisismainpaper0or `Gt' \fi) for the cost model \eqref{eq:cost definition} which was designed to minimize the total energy consumption\footnote{While the final solution in \cite{Chiu23JSAC} was randomized over a set of mixing matrices, we only use the deterministic design by Algorithm~1 for a fair comparison, as the same randomization can be applied to the proposed solution.\looseness=-1}. 
In `Vanilla D-PSGD', `Periodic', and `MATCHA', identical weights are assigned to every activated link, whereas in `Greedy total' and the proposed algorithm, heterogeneous link weights are parts of the designs.
We first tune MATCHA to minimize its loss at convergence, and then tune the other benchmarks to activate the same number of links on the average. We evaluate two versions of the proposed algorithm (`Greedy per-node' \if\thisismainpaper0or `Gp'\fi): one with the same maximum energy consumption per node as the best-performing benchmark (leading to a budget that amounts to $55\%$ of maximum degree) and the other with the same accuracy as the best-performing benchmark at convergence (leading to a budget that amounts to $25\%$ of maximum degree). \looseness=-1

\begin{figure}[t!]
    \centering
    \centerline{\mbox{\includegraphics[width=1.0\linewidth]{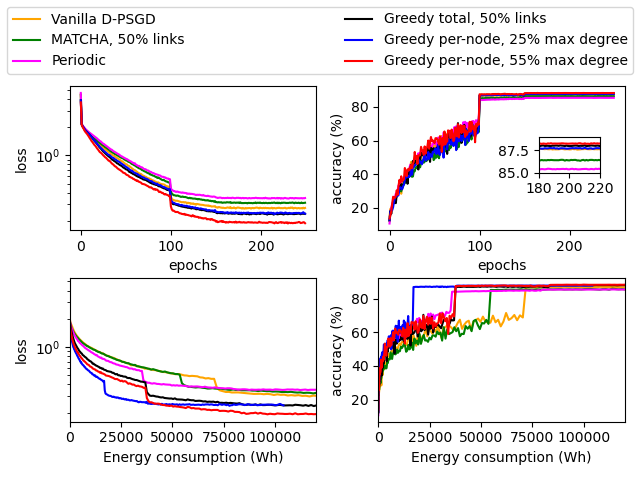}}}
    \vspace{-1em}
    \caption{Training loss and testing accuracy for decentralized learning over Roofnet. }
    \label{fig:results_general}
    \vspace{-1em}
\end{figure}

\if\thisismainpaper0
\begin{table}[htbp]
  % \centering
  \caption{Stats at epoch 200.}\label{tab:Stats at last epoch}
  \small
  \begin{tabular}{lcccc}
    \toprule
    \textbf{Method} & \textbf{Loss} & \textbf{Acc.}& \textbf{Per-node Ene.} & \textbf{Total Ene.} \\
    \midrule
    Vanilla & 0.277 & 87.6\% & 280kWh & 4980kWh \\
    Periodic & 0.350 & 85.4\% & 140kWh & 2490kWh \\
    MATCHA & 0.313 & 86.4\% & 213kWh & 2490kWh \\
    Gt, 50\% & 0.236 & 88.0\% & 147kWh & 2477kWh\\
Gp, 25\% & 0.244 & 87.7\% & 67kWh & 1465kWh\\
Gp, 55\% & 0.192 & 88.2\% & 147kWh & 3169kWh\\
    \bottomrule
  \end{tabular}
\end{table}
\fi

\emph{Results:} Fig.~\ref{fig:results_general} shows the loss and accuracy of the trained model, with respect to both the epochs and the maximum energy consumption per node. We see that: (i) instead of activating all the links as in `Vanilla D-PSGD', it is possible to activate fewer (weighted) links without degrading the quality of the trained model; (ii) different ways of selecting the links to activate lead to different quality-cost tradeoffs; (iii) the algorithm designed to optimize the total energy consumption (`Greedy total') performs the best among the benchmarks; (iv) however, by balancing the energy consumption across nodes, the proposed algorithm (`Greedy per-node') can achieve either a better loss/accuracy at the same maximum energy consumption per node, or a lower maximum energy consumption per node at the same loss and accuracy. In particular, the proposed algorithm (at $25\%$ maximum degree) 
can save $54\%$ energy at the busiest node compared to the best-performing benchmark (`Greedy total') and $76\%$ compared to `Vanilla D-PSGD', while producing a model of the same quality. 
Meanwhile, the proposed algorithm also saves $41$--$71\%$ of the total energy consumption compared to the benchmarks, as shown in 
\if\thisismainpaper1
\cite[Table~1]{Xusheng23:report}.
\else
Table~\ref{tab:Stats at last epoch}.
\fi
\looseness=-1

\section{Conclusion}\label{sec:Conclusion}

Based on an explicit characterization of how the mixing matrix affects the convergence rate in decentralized learning, we proposed a bi-level optimization for mixing matrix design, with the lower level solved by graph sparsification. This enabled us to develop a solution with guaranteed performance for a special case and a heuristic for the general case. Our solution greatly reduced the energy consumption at the busiest node while maintaining the quality of the trained model.  \looseness=-1

\vfill\pagebreak

% References should be produced using the bibtex program from suitable
% BiBTeX files (here: strings, refs, manuals). The IEEEbib.bst bibliography
% style file from IEEE produces unsorted bibliography list.
% -------------------------------------------------------------------------
\bibliographystyle{IEEEbib}
\bibliography{references}

\if\thisismainpaper0
\appendix
% \section*{Appendix}

\section{Supporting Proofs}\label{subsec:Supporting Proofs}

\subsection{Proof of Theorem~\ref{thm:new convergence bound}} 
We first recall the following result from \cite{Koloskova20ICML}.
\begin{theorem}\cite[Theorem~2]{Koloskova20ICML} \label{thm:K21 convergence bound}
Let $\bm{J}:={1\over m}\bm{1} \bm{1}^\top$. Under assumptions (1)--(3), if there exist a constant $p\in (0,1]$ such that the mixing matrices $\{\bm{W}^{(k)}\}_{k=1}^K$, each being symmetric with each row/column summing to one\footnote{Originally, \cite[Theorem~2]{Koloskova20ICML} had a stronger assumption that each mixing matrix is doubly stochastic, but we have verified that it suffices to have each row/column summing to one.}, satisfy
\scale{
\begin{align}\label{eq:condition on p}
\hspace{-.5em}    \E[\|\bm{X}\bm{W}^{(k)} - \bm{X}\bm{J}\|_F^2] \leq (1-p) \|\bm{X}-\bm{X}\bm{J}\|_F^2, 
\end{align}
}for all $\bm{X}:= [\bm{x}_1,\ldots,\bm{x}_m]$ and integer $k\geq 1$, then D-PSGD can achieve $ \frac{1}{K} \sum_{k=1}^K \mathbbm{E}[\|\nabla F(\boldsymbol{\overline{x}}^k)\|^2]\leq \epsilon_0$ for any given $\epsilon_0>0$ ($\overline{\bm{x}}^{(k)}:={1\over m}\sum_{i=1}^m \bm{x}^{(k)}_i$) when the number of iterations reaches
\scale{
\begin{align}
K(p)&:= l(F(\overline{\bm{x}}^{(1)})-F_{\inf}) \nonumber\\
&\hspace{-2.75em}   \cdot \hspace{-.15em}O\hspace{-.25em}\left(\hspace{-.25em}{\hat{\sigma}^2\over m\epsilon_0^2} \hspace{-.15em}+\hspace{-.15em}{\hat{\zeta}\sqrt{M_1+1}+\hat{\sigma}\sqrt{p}\over p \epsilon_0^{3/2}} \hspace{-.15em}+\hspace{-.15em} {\sqrt{(M_2+1)(M_1+1)}\over p\epsilon_0} \hspace{-.05em}\right). \nonumber %\label{eq:new bound on K}
\end{align}
}
\end{theorem}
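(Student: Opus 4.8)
The statement is verbatim Theorem~2 of \cite{Koloskova20ICML}, with the single relaxation flagged in the footnote: the original hypothesis that each $\bm{W}^{(k)}$ is doubly stochastic is weakened to symmetry together with unit row and column sums, i.e.\ $\bm{W}^{(k)}\bm{1}=\bm{1}$ and $\bm{1}^\top\bm{W}^{(k)}=\bm{1}^\top$, dropping the nonnegativity of the entries. The plan is therefore not to reprove the rate from scratch but to \emph{audit} the proof of \cite[Theorem~2]{Koloskova20ICML} and verify that nonnegativity of $\bm{W}^{(k)}$ is never invoked: everywhere the mixing structure is used, only symmetry, the unit row/column sums, and the contraction hypothesis \eqref{eq:condition on p} are needed. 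Since all three of these properties are retained under the weaker assumption, the identical chain of inequalities then delivers the stated iteration count $K(p)$ unchanged.

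First I would isolate the only two places where the mixing matrix enters the argument. The descent analysis is carried out on the network-average iterate $\overline{\bm{x}}^{(k)}$, and its one structural prerequisite is that a mixing step preserves the average, $\overline{\bm{X}^{(k)}\bm{W}^{(k)}}=\overline{\bm{X}^{(k)}}$, so that $\overline{\bm{x}}^{(k)}$ obeys the same recursion as centralized SGD driven by the averaged stochastic gradient. With $\bm{J}={1\over m}\bm{1}\bm{1}^\top$, this is exactly $\bm{W}^{(k)}\bm{J}=\bm{J}\bm{W}^{(k)}=\bm{J}$, which follows solely from $\bm{W}^{(k)}\bm{1}=\bm{1}$ and $\bm{1}^\top\bm{W}^{(k)}=\bm{1}^\top$ (symmetry lets one use either side), with no appeal to nonnegativity. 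The second entry point is the consensus-error recursion that bounds $\E[\|\bm{X}^{(k)}-\bm{X}^{(k)}\bm{J}\|_F^2]$ across iterations; this is driven entirely by \eqref{eq:condition on p}, which is a hypothesis of the theorem rather than a consequence of any entrywise property of $\bm{W}^{(k)}$. I would then step through the descent lemma (from $l$-smoothness and assumptions (1)--(3)), the per-iteration consensus bound, and the telescoping and learning-rate tuning that produce the three terms of $K(p)$, confirming line by line that $\bm{W}^{(k)}$ appears only through $\bm{J}$-invariance or through \eqref{eq:condition on p}.

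The main obstacle, and the one genuine content of the relaxation, is to rule out an implicit use of the spectral bound $\|\bm{W}^{(k)}\|\le 1$ (equivalently, eigenvalues confined to $[-1,1]$). This bound is automatic for a symmetric \emph{doubly stochastic} matrix by Perron--Frobenius, but it can \emph{fail} once nonnegativity is dropped: for $\bm{W}=\bm{I}-\bm{L}$ with a weighted Laplacian $\bm{L}$, an eigenvalue $\lambda_m(\bm{L})>2$ forces the spectral norm above one. I would thus scan Koloskova's derivation for any step that bounds a quantity via $\|\bm{X}\bm{W}^{(k)}\|\le\|\bm{X}\|$ or asserts $\bm{W}^{(k)}\succeq 0$, and check that each such step is instead subsumed by \eqref{eq:condition on p}, which controls precisely $\|\bm{X}\bm{W}^{(k)}-\bm{X}\bm{J}\|_F$---the only norm the analysis actually requires. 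Provided every appeal to the mixing operator routes through \eqref{eq:condition on p} and the $\bm{J}$-invariance above, which I expect to hold because the ``unified'' analysis of \cite{Koloskova20ICML} is organized around the contraction parameter $p$ rather than entrywise stochasticity, the original proof transfers verbatim and establishes the theorem under the weaker hypothesis.
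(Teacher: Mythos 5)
Your proposal is correct and matches what the paper itself does: the paper gives no standalone proof of this theorem, but imports \cite[Theorem~2]{Koloskova20ICML} and asserts in a footnote that the doubly-stochastic hypothesis can be relaxed to symmetry with unit row/column sums---precisely the audit you carry out, including correctly identifying that the mixing matrix enters only via $\bm{W}^{(k)}\bm{J}=\bm{J}\bm{W}^{(k)}=\bm{J}$ (average preservation) and via the contraction hypothesis \eqref{eq:condition on p}, so nonnegativity (and in particular any implicit $\|\bm{W}^{(k)}\|\le 1$ bound) is never needed. Your flagging of the spectral-norm pitfall for $\bm{W}=\bm{I}-\bm{L}$ is exactly the right point to check and is consistent with the paper's claimed verification.
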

\emph{Remark:} Originally, \cite[Theorem~2]{Koloskova20ICML} only mandates \eqref{eq:condition on p} for the product of $\tau$ mixing matrices, but we consider the case of $\tau = 1$ for the tractability of mixing matrix design.

% The parameter $p$ that minimizes the required number of iterations $K(p)$ is given by
Since $\bm{W}^{(1)}, \dots, \bm{W}^{(K)}$ are i.i.d. copies of a random matrix $\bm{W}$ in our case, we first rewrite \eqref{eq:condition on p} as
\looseness=-1
\scale{
\begin{align}
    p := \min_{\bm{X}\neq \bm{0}} \left(1-{\E[\|\bm{X}(\bm{W}-\bm{J})\|_F^2]\over \|\bm{X}(\bm{I}-\bm{J})\|_F^2}\right).\label{eq:p-maximization objective}
\end{align}
}Yet \eqref{eq:p-maximization objective} is not an explicit function of the mixing matrix, so in the next lemma, we relate it to an equivalent quantity that is an explicit function of $\bm{W}$ and thus easier to handle. 
% Define $\bm{J}:={1\over m}\bm{1} \bm{1}^\top$ as an $m\times m$ matrix with all entries being ${1\over m}$. 
%Suppose that the mixing matrices for different iterations are i.i.d. with the same distribution as $\bm{W}$. 
% We can relate $p$ to an explicit function of $\bm{W}$ as follows.  

\begin{lemma}\label{lem:p=1-rho}
For any randomized mixing matrix $\bm{W}$ that is symmetric with every row/column summing to one, $p$ defined in \eqref{eq:p-maximization objective} satisfies $p = 1-\rho$ for $\rho := \|\E[\bm{W}^\top\bm{W}]-\bm{J}\|$. 
\end{lemma}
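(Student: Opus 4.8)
The plan is to turn the variational definition of $p$ in \eqref{eq:p-maximization objective} into a generalized Rayleigh quotient over positive semidefinite matrices and then evaluate it via the Loewner order. First I would rewrite the numerator and denominator of the ratio using the identity $\|\bm{Y}\|_F^2=\operatorname{tr}(\bm{Y}^\top\bm{Y})$ together with the cyclic property of the trace. Setting $\bm{M}:=\bm{X}^\top\bm{X}$, which ranges over all nonzero positive semidefinite matrices as $\bm{X}$ ranges over nonzero matrices, the numerator becomes $\E[\|\bm{X}(\bm{W}-\bm{J})\|_F^2]=\operatorname{tr}\!\big(\bm{M}\,\E[(\bm{W}-\bm{J})^2]\big)$ (using that $\bm{W}-\bm{J}$ is symmetric), and the denominator becomes $\operatorname{tr}\!\big(\bm{M}(\bm{I}-\bm{J})\big)$, since $\bm{I}-\bm{J}$ is an orthogonal projection and hence idempotent. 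Thus $1-p=\max_{\bm{M}\succeq\bm{0},\,\bm{M}\neq\bm{0}}\operatorname{tr}(\bm{M}\bm{P})/\operatorname{tr}(\bm{M}(\bm{I}-\bm{J}))$, where $\bm{P}:=\E[(\bm{W}-\bm{J})^2]$.

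Next I would simplify $\bm{P}$ using the row/column-sum-one property. Since $\bm{W}\bm{1}=\bm{1}$ and $\bm{1}^\top\bm{W}=\bm{1}^\top$, one has $\bm{W}\bm{J}=\bm{J}\bm{W}=\bm{J}^2=\bm{J}$, so $(\bm{W}-\bm{J})^2=\bm{W}^2-\bm{J}$; taking expectations and using $\bm{W}^2=\bm{W}^\top\bm{W}$ gives $\bm{P}=\E[\bm{W}^\top\bm{W}]-\bm{J}$, whose spectral norm is exactly $\rho$. The same identities give $\bm{P}\bm{1}=\bm{0}$, so $\bm{P}$ leaves $\bm{1}^\perp$ invariant, and, being symmetric with $\E[\bm{W}^\top\bm{W}]$ positive semidefinite, its restriction to $\bm{1}^\perp$ coincides with that of $\E[\bm{W}^\top\bm{W}]$ and is positive semidefinite. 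Hence $\bm{P}\succeq\bm{0}$ and $\rho=\|\bm{P}\|=\lambda_{\max}(\bm{P})$.

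It remains to evaluate the trace ratio. For the upper bound I would invoke the Loewner inequality $\bm{P}\preceq\lambda_{\max}(\bm{P})(\bm{I}-\bm{J})$, valid because both sides vanish on $\bm{1}$ while on $\bm{1}^\perp$ the projection acts as the identity; then $\operatorname{tr}(\bm{M}\bm{P})\le\rho\,\operatorname{tr}(\bm{M}(\bm{I}-\bm{J}))$ for every $\bm{M}\succeq\bm{0}$, since the trace of a product of two positive semidefinite matrices is nonnegative. For achievability I would take $\bm{M}=\bm{v}\bm{v}^\top$ with $\bm{v}$ a unit top eigenvector of $\bm{P}$; as $\bm{P}$ vanishes on $\bm{1}$ and preserves $\bm{1}^\perp$, we may take $\bm{v}\in\bm{1}^\perp$, so the denominator equals $\|\bm{v}\|^2=1$ and the numerator equals $\bm{v}^\top\bm{P}\bm{v}=\rho$, realizing the ratio $\rho$. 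Combining the two bounds yields $1-p=\rho$, i.e.\ $p=1-\rho$.

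The main obstacle is that the denominator matrix $\bm{I}-\bm{J}$ is singular, so the expression is not a standard Rayleigh quotient and $0/0$ configurations must be ruled out. I would handle this by confining attention to $\bm{1}^\perp$, where $\bm{I}-\bm{J}$ acts as the identity, and by checking that any $\bm{X}$ killing the denominator also kills the numerator: $\bm{X}(\bm{I}-\bm{J})=\bm{0}$ forces $\bm{X}=\bm{X}\bm{J}$, whence $\bm{X}(\bm{W}-\bm{J})=\bm{X}\bm{J}(\bm{W}-\bm{J})=\bm{0}$. Such $\bm{X}$ therefore impose no constraint on $p$ and may be dropped from the minimization, so the reduction to the maximization over nonzero $\bm{M}$ supported on $\bm{1}^\perp$ is justified.
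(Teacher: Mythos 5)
Your proof is correct, and it takes a more self-contained route than the paper. The paper splits the identity into two inequalities and outsources both: the direction $p\le 1-\rho$ is cited from \cite{Chiu23JSAC}, and the direction $p\ge 1-\rho$ is obtained by invoking \cite[Lemma~1]{MATCHA19} (the bound $\E[\|\bm{A}(\bm{W}-\bm{J})\|_F^2]\le\rho\|\bm{A}\|_F^2$) with $\bm{A}=\bm{X}(\bm{I}-\bm{J})$ and then simplifying via $\bm{W}\bm{J}=\bm{J}\bm{W}=\bm{J}^2=\bm{J}$. You instead recast $1-p$ as a trace ratio $\operatorname{tr}(\bm{M}\bm{P})/\operatorname{tr}(\bm{M}(\bm{I}-\bm{J}))$ with $\bm{P}=\E[(\bm{W}-\bm{J})^2]=\E[\bm{W}^\top\bm{W}]-\bm{J}\succeq\bm{0}$, prove the upper bound from the Loewner inequality $\bm{P}\preceq\rho(\bm{I}-\bm{J})$, and exhibit the rank-one extremizer $\bm{M}=\bm{v}\bm{v}^\top$ for a top eigenvector $\bm{v}\in\bm{1}^\perp$. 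This buys a single argument that proves both directions from first principles (effectively re-deriving the specialized content of both cited lemmas), and your explicit treatment of the degenerate $\bm{X}(\bm{I}-\bm{J})=\bm{0}$ case is more careful than anything in the paper, which leaves the $0/0$ configurations of \eqref{eq:p-maximization objective} implicit. The paper's version is shorter on the page precisely because it leans on the two external results; yours is longer but independent of them. The underlying algebraic facts (symmetry of $\bm{W}$, $\bm{W}\bm{1}=\bm{1}$, idempotence of $\bm{I}-\bm{J}$) are of course the same in both.
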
 

Theorem~\ref{thm:new convergence bound} follows from Theorem~\ref{thm:K21 convergence bound} and Lemma~\ref{lem:p=1-rho},
so it remains to prove Lemma~\ref{lem:p=1-rho}. 

\begin{proof}[Proof of Lemma~\ref{lem:p=1-rho}]
One direction $p \le 1 - \rho$ was proved in \cite{Chiu23JSAC}, and we will prove that $p \ge 1 - \rho$, or equivalently $1-p\le \rho$.
For this, we rely on the following fact (see \cite[Lemma~1]{MATCHA19}\footnote{Although \cite[Lemma~1]{MATCHA19} originally assumed $\bm{W}$ to be doubly stochastic, we have verified that having each row/column summing to one is sufficient.}): for any matrix $\bm{A}\in \mathbb{R}^{d\times m}$, 
\begin{equation}
    \label{eq:lem31a}
    \E \left[ \|\bm{A}(\bm{W}-\bm{J})\|_F^2 \right] \le \rho \|\bm{A} \|_F^2.
\end{equation}
We fix a matrix $\bm{X} \neq 0$.
Now set $\bm{A} = \bm{X}(\bm{I}-\bm{J})$, and
\eqref{eq:lem31a} yields that 
\begin{equation}
    \label{eq:lem31b}
    \frac{ \E \left[ \|\bm{X}(\bm{I}-\bm{J})(\bm{W}-\bm{J})\|_F^2 \right]}{ \|\bm{X}(\bm{I}-\bm{J})\|_F^2}\le \rho.
\end{equation}
Note that for our choice of matrix $\bm{W}$, $\bm{WJ}=\bm{JW}=\bm{JJ}=\bm{J}$. Hence, 
\begin{equation}
    \label{eq:lem31c}
    \bm{X}(\bm{I}-\bm{J})(\bm{W}-\bm{J})
    = \bm{X} (\bm{W} - \bm{J} - \bm{J} +\bm{J}) 
    = \bm{X} (\bm{W} - \bm{J}).
\end{equation}
Thus, by \eqref{eq:lem31b} and \eqref{eq:lem31c}, 
we establish that 
\begin{equation}
    \label{eq:lem31d}
    {\E[\|\bm{X}(\bm{W}-\bm{J})\|_F^2]\over \|\bm{X}(\bm{I}-\bm{J})\|_F^2} \le \rho.
\end{equation}
Since $\bm{X}$ is an arbitrary nonzero matrix, it follows from \eqref{eq:p-maximization objective} that 
$1-p\le \rho$.
\end{proof}

\subsection{Proof of  Lemma~\ref{lem:rho upper-bound}}

\begin{proof}[Proof of Lemma~\ref{lem:rho upper-bound}]
As the spectral norm is convex, Jensen's inequality implies\looseness=-1 
\begin{equation}
    \label{eq:Jensenmtx}
     \rho:=\|\E[\bm{W}^\top\bm{W}]-\bm{J}\|
     \le 
      \E\|\bm{W}^\top\bm{W}-\bm{J}\|.
\end{equation}
For a given Laplacian matrix $\bm{L}$, 
\begin{align}
\|\bm{W}^{\top}\bm{W}-\bm{J}\| = \|\bm{W} - \bm{J}\|^2 = \|\bm{I}-\bm{L}-\bm{J}\|^2, \label{eq:rho proof - 1}    
\end{align}
where the first ``$=$'' is because the eigenvalues of $\bm{W}^{\top}\bm{W}-\bm{J}$ are squares of the eigenvalues of $\bm{W}-\bm{J}$ as shown by eigenvalue decomposition. 
By Lemma~IV.2 in \cite{Chiu23JSAC}, 
\begin{align}\label{eq:rho proof - 2}
\|\bm{I}-\bm{L}-\bm{J}\| = \max\{1 - \lambda_{2}(\bm{L}),\: \lambda_{m}(\bm{L}) - 1\}, 
\end{align}
Combining \eqref{eq:Jensenmtx}--\eqref{eq:rho proof - 2} yields \eqref{eq:rho upper-bound eigen}.
\end{proof}

\subsection{Proof of Theorem~\ref{thm:nphard}}

\begin{proof}[Proof of Theorem~\ref{thm:nphard}]
% To show the hardness of our maximization problem, 
It suffices to reduce from the Hamiltonian path problem~\cite{Garey90} to our problem. 
% in its decision version.
% we reduce the Hamiltonian path problem~\cite{Garey90} 
% to its decision version: given $\gamma$, does there exist a feasible solution $E\subseteq E_{cand}$ such that $\lambda_2(\bm{L}_{G+E}) > \gamma$? 
Recall the Hamiltonian path problem for a given graph $G = (V, E)$ is a problem of determining the existence of a path $P\subseteq E$ visiting each vertex $v\in V$ exactly once; 
such path is referred to as \emph{ Hamiltonian path}.

First, we construct an input graph $G_{cand}$ 
% an instance of degree-constrained maximization of $\lambda_2$ 
for each instance  $G_0=(V_0, E_0)$ of the Hamiltonian path problem. 
Let $V=V_0, E_{cand} = E_0$, and
% Given a graph $G_0=(V_0, E_0)$, let $G=(V_0, \emptyset)$, $E_{cand}=E_0$,
$d_i=2$ for all $v_i\in V_0$.
% , and $\gamma = 0$. 
%is $\lambda_2(L_{G+E})$ larger than $0$ ($\lambda_2(L_{G+E}) > 0$)? 

Second, 
% we show that answering the decision problem in $G$ is equivalent to solving the Hamiltonian path problem in $G_0$. 
we show there exists $E\subseteq E_{cand}$ such that 
if $H = (V_0, E)$ then 
$deg_{H}(v_i) \le d_i$ for each vertex $v_i \in V_0$ and
$\lambda_2(\bm{L}_{H}) > 0$ if and only if $G_0$ has a Hamiltonian path.

If there exists a Hamiltonian path $P\subseteq E_0$, then since the degree of each vertex in $H=(V_0, P)$ would be less than or equal to 2, $H$ satisfies the constraint $deg_H(v_i)\leq 2$; 
moreover, $\lambda_2(\bm{L}_H) >0$ since $H$ is connected.
% $E$ must be a feasible solution that satisfies $\lambda_2(\bm{L}_{G+E}) > 0$. 

Conversely, suppose $H=(V_0, E)$ satisfies $E \subseteq E_{cand}$, $deg_{H}(v_i) \le d_i$ for each vertex $v_i \in V_0$ and
$\lambda_2(\bm{L}_{H}) > 0$.
Then apparently $H$ is connected, and
% On the other hand, suppose that there exists a feasible solution $E\subseteq E_{cand}$ such that $\lambda_2(\bm{L}_{G+E}) > 0$, which means that the graph $(V_0, E)$ is connected with each node degree less than or equal to 2.
a connected graph with degrees less than or equal to 2 can only be a path or a cycle. A path must be a Hamiltonian path in $G_0$ since it connects all the vertices; similarly, a cycle must contain a Hamiltonian path. %As a result, since $E \subseteq E_0$, there must exist a Hamiltonian path in $G_0$=($V_0$, $E_0$).

% We have shown that the constructed instance of the decision version of the degree-constrained maximization of $\lambda_2$ problem is feasible if and only if the corresponding instance of the Hamiltonian path problem is feasible. Since the Hamiltonian path problem is NP-complete~\cite{Garey90}, the degree-constrained maximization of $\lambda_2$ problem is NP-hard. 
The proof is now complete as the Hamiltonian path problem is NP-complete~\cite{Garey90}.
\end{proof}

\section{Additional Evaluation Results}\label{subsec:Additional Evaluation}

\begin{figure}[t!]
    \centering
    \centerline{\mbox{\includegraphics[width=1.0\linewidth]{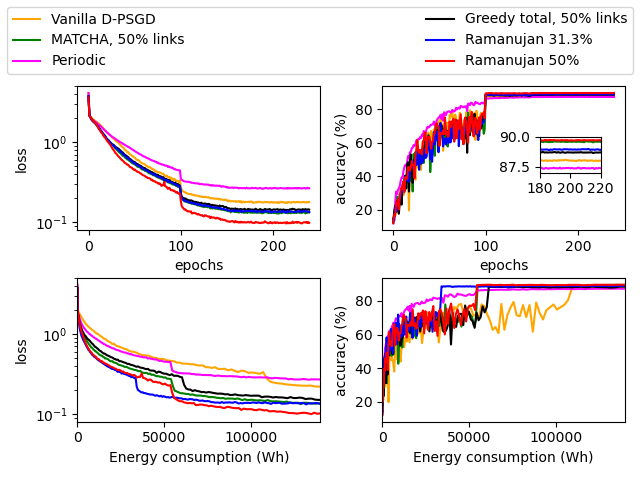}}}
    \vspace{-1em}
    \caption{Training loss and testing accuracy for decentralized learning over a complete graph. }
    \label{fig:results complete}
    \vspace{-1em}
\end{figure}

In addition to the evaluation of the general case in Section~\ref{sec:Performance Evaluation}, we also evaluate the special case of a fully-connected base topology. We use the same experiment setting and benchmarks as in Section~\ref{sec:Performance Evaluation}, except that the base topology is a $33$-node complete graph. The proposed solution in this case is the Ramanujan-graph-based design in Section~\ref{subsubsec:Ramanujan}. We still evaluate two versions of this solution: one with the same maximum energy consumption per node as the best-performing benchmark (leading to a budget that amounts to $50\%$ of node degree in the base topology) % degree <= 16, total degree = 32
and the other with an accuracy no worse than vanilla D-PSGD (leading to a budget that amounts to $31.3\%$ of node degree). % degree <= 10, total degree = 32

The results in Fig.~\ref{fig:results complete} show that: (i) similar to Fig.~\ref{fig:results_general}, careful selection of the links to activate can notably improve the quality-cost tradeoff in decentralized learning; (ii) however, the best-performing benchmark under fully-connected base topology becomes `MATCHA' even if it was designed for a different objective \cite{MATCHA19}; (iii) nevertheless, by intentionally optimizing a parameter \eqref{eq:rho upper-bound eigen} controlling the convergence rate while balancing the communication load across nodes, the proposed Ramanujan-graph-based solution can achieve a better loss/accuracy at the same maximum energy consumption per node (`Ramanujan $50\%$'), or $37.5\%$ lower maximum energy consumption per node with a loss/accuracy no worse than `Vanilla D-PSGD' (`Ramanujan $31.3\%$'). 

Compared with the results in Fig.~\ref{fig:results_general}, the proposed solution delivers less energy saving at the busiest node under a fully-connected base topology. Intuitively, this phenomenon is because the symmetry of the base topology leads to naturally balanced loads across nodes even if this is not considered by the benchmarks, which indicates that there is more room for improvement in cases with asymmetric base topology. 

\fi

\end{document}